\newif\ifdraft
\newcommandx{\nt}[2][1=]{\todo[linecolor=red,
			backgroundcolor=red!10,bordercolor=red,#1]{#2}}
\newcommandx{\sh}[2][1=]{\todo[linecolor=blue,
			backgroundcolor=blue!10,bordercolor=blue,#1]{SH:#2}}
\newcommandx{\sw}[2][1=]{\todo[linecolor=orange,
			backgroundcolor=orange!10,bordercolor=orange,#1]{SF:#2}}
\newcommandx{\jy}[2][1=]{\todo[linecolor=green,
			backgroundcolor=green!10,bordercolor=green,#1]{JY:#2}}
\newcommand{\nt}[1]{{}}
\newcommand{\sh}[1]{{}}
\newcommand{\sw}[1]{{}}
\newcommand{\jy}[1]{{}}
\newif\iftwocolumn
\newtheorem{proposition}{Proposition}[section]
\theoremstyle{definition}
\theoremstyle{remark}
\newtheorem*{remark}{Remark}
\def\subsubsection{\@startsection{subsubsection}
                                 {3}
                                 {\z@ \hspace*{1mm}}
                                 {0ex plus 0.1ex minus 0.1ex}
                                 {0ex}
                                 {\normalfont\normalsize\itshape}}
\newcommand{\pnp}{\textsc{PnP}\xspace}
\newcommand{\W}{\mathcal{W}}
\newcommand{\R}{\mathbb{R}}
\newcommand{\getpnp}{\textsc{GetPnPTime}\xspace}
\newcommand{\algperm}{\textsc{OptSeq}\xspace}
\newcommand{\algdp}{\textsc{OptSeqDP}\xspace}
\newcommand{\alglocal}{\textsc{SubOptDP}\xspace}
\newcommand{\algspt}{\textsc{SPT}\xspace}
\newcommand{\alge}{\textsc{Euclidean}\xspace}
\newcommand{\algfifo}{\textsc{FIFO}\xspace}
\title{
Toward Fast and Optimal Robotic Pick-and-Place on a Moving Conveyor
}
\author{
Shuai D. Han \quad Si Wei Feng \quad Jingjin Yu
\thanks{
S. D. Han, S. W. Feng, and J. Yu are with the Department of Computer 
Science, Rutgers, the State University of New Jersey, Piscataway, NJ, 
USA. E-Mails: \{{\tt shuai.han, siwei.feng, jingjin.yu}\}\hspace*{.25em}
\MVAt \hspace*{.25em}{\tt rutgers.edu}. 
}
}
\begin{document}
\maketitle
\thispagestyle{empty}
\pagestyle{empty}
\ifdraft
\begin{picture}(0,0)%
\put(-12,105){
\framebox(505,40){\parbox{\dimexpr2\linewidth+\fboxsep-\fboxrule}{
\textcolor{blue}{
The file is formatted to look identical to the final compiled IEEE 
conference PDF, with additional margins added for making margin 
notes. Use $\backslash$todo$\{$...$\}$ for general side comments, 
$\backslash$sh$\{$...$\}$ for Han's comments, $\backslash$sw$
\{$...$\}$ for Siwei's comments, and $\backslash$jy$\{$...$\}$ for 
JJ's comments. Set $\backslash$drafttrue to $\backslash$draftfalse
to remove the formatting. 
}}}}
\end{picture}
\vspace*{-5mm}
\fi

\vspace*{-5mm}
\begin{abstract}
Robotic pick-and-place (\pnp) operations on moving conveyors find a 
wide range of industrial applications. In practice, simple greedy
heuristics (e.g., prioritization based on the time to process 
a single object) are applied that achieve reasonable efficiency. We 
show analytically that, under a simplified telescoping robot model, 
these greedy approaches do not ensure time optimality of \pnp 
operations. To address the shortcomings of classical solutions, 
we develop algorithms that compute optimal object picking sequences 
for a predetermined finite horizon. Employing dynamic programming 
techniques and additional heuristics, our methods scale to up to 
tens to hundreds of objects. 
In particular, the fast algorithms we develop come with running time 
guarantees, making them suitable 
for real-time \pnp applications 
demanding high throughput. 
Extensive evaluation of our algorithmic solution over dominant 
industrial \pnp robots used in real-world applications, i.e., Delta 
robots and Selective Compliance Assembly Robot Arm (SCARA) robots, 
shows that a typical efficiency gain of around $10$-$40\%$ over 
greedy approaches can be realized. 
\end{abstract}

\section{Introduction}\label{sec:intro}
We present a study aiming at developing fast algorithms
for optimal robotic pick-and-place (\pnp) on a moving conveyor. Modeling 
typical industrial robotic \pnp scenarios, we examine the setting where 
a robotic arm is tasked to continuously pick up objects, one at a time, 
from a moving conveyor and drop them off at a fixed location. 
Based on our investigation, it would appear that greedy approaches had 
been used in practice because of the fast online nature of the task, which is explained in two aspects. 
First, estimating the poses of multiple objects requires advanced sensing techniques, whereas it is much 
easier to detect an object as it enters the scene (e.g., by using a laser scanner). Nowadays, however, computer vision 
algorithms are fast enough to accurately report the poses of 
many objects. Second, on a fast-moving conveyor, very limited 
computation can be done before an object becomes inaccessible. 

In this paper, we first work with a simplified robot model to show 
analytically that commonly used greedy approaches do not produce time-optimal 
solutions in general. Then, we develop 
dynamic programming based algorithms capable of computing (near-)optimal
solutions for tens to hundreds of objects in under a second. 
Because the running time can be accurately bounded for a given number of 
objects, our algorithmic solution can be customized for real-time \pnp 
operations. Extensive simulation studies on both simplified and practical 
robot models including Delta and SCARA (Selective Compliance Assembly 
Robot Arm) robots show that our proposed methods consistently yield about 
$10$-$40\%$ efficiency gain with respect to the number of objects that 
can be successfully picked.


\begin{figure}[ht!]
\begin{center}
\begin{overpic}[width={\iftwocolumn 3.4in \else 5in \fi},tics=5]
{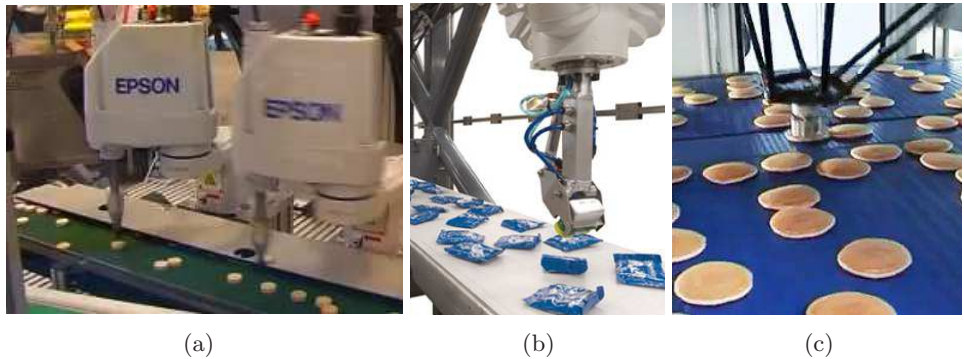}
\put(18.5, -3){{\small (a)}}
\put(54, -3){{\small (b)}}
\put(83.5, -3){{\small (c)}}
\end{overpic}
\end{center}
\caption{\label{fig:realsys} A few conveyor-based robotic \pnp systems
(a) Two Selective Compliance Assembly Robot Arm (SCARA) robots working 
on picking and placing machine parts (b) (c) Delta robots packing food 
items.}
\end{figure}
The invention and development of conveyor belt systems for material 
handling have revolutionized many industries over the years 
\cite{hounshell1985american}. With advances in computer vision 
and robotic manipulation, conveyor-based robotic \pnp solutions 
\cite{mirtich1996estimating,carlisle1994pivoting,causey1997design,
berkowitz1996designing,goldberg1997estimating} have seen rapid adoptions 
that have yielded increasing levels of automation (see Fig.~\ref{fig:realsys}).
The intrinsic goal in deploying such systems is to realize continuous 
and fast \pnp operations. Therefore, a natural algorithmic question 
to ask here is how optimal is a given solution \cite{causey1999testing,
branicky2000modeling} and how better algorithms may be designed to 
improve system throughput. 

A fairly thorough fundamental algorithmic study of conveyor-based 
robotic \pnp is carried out \cite{chalasani1996algorithms}, where 
several polynomial-time approximation algorithms are provided for 
a variety of \pnp problems. The work also pointed out that many 
such problems are at least NP-Hard \cite{lenstra1981complexity,
helvig1998moving}, given the similarity between robotic \pnp and 
the traveling salesperson problem (TSP). This and other studies, 
e.g., \cite{han2018complexity}, also link the robotic \pnp problem
to classical vehicle routing problems (VRP), which has many variations
on its own \cite{berbeglia2007static,christofides1969algorithm,
frederickson1976approximation,treleaven2013asymptotically}. We note 
that while polynomial-time approximation algorithms for \pnp have 
been proposed \cite{chalasani1996algorithms}, the algorithms optimize 
over metrics like $L_1$ and the approximately optimal 
solutions are not practical. The study also does not sufficiently consider 
robot geometry and dynamics, which are very important factors in 
real-world applications. 

When it comes to {\em practically efficient} algorithmic solutions for 
\pnp operations over a conveyer, the first proposed solutions resorted 
to a first-in first-out (FIFO) rule for prioritizing the object 
picking order \cite{li1997line,pardo1995system}. As pointed out, the FIFO 
heuristic can result in fairly sub-optimal solutions \cite{mattone2000sorting}. 
To address this, a job scheduling rule called
shortest processing time (SPT) \cite{schrage1968letter} was employed 
\cite{mattone2000sorting}. With further improvements, SPT and 
variants are shown to be consistently superior to FIFO. Since
\cite{mattone2000sorting}, research on \pnp over conveyor appears to 
have shifted to using multiple robot arms to further boost the 
throughput. Among these, non-cooperative game theory was explored 
\cite{bozma2012multirobot} whereas FIFO and SPT heuristics are 
employed \cite{daoud2014efficient}. A recent approach 
combines randomized adaptive search with Monte Carlo 
simulation \cite{huang2015robust}. 

\noindent\textbf{Contributions}. The main contributions of this 
work are two. First, after observing and analytically characterizing 
sub-optimality of existing greedy \pnp solutions, we develop a dynamic programming-based optimal finite-horizon \pnp algorithm that applies 
to arbitrary robot models for which the dynamics can be simulated. 
Within a second, our algorithm is capable of computing optimal solutions 
for over $20$ objects, which requires the exact processing of $20!$ 
possible picking sequences. With additional locality-based heuristics, 
we can compute near-optimal solutions for over $100$ objects 
in under one second. 
Second, through extensive simulation study over typical industrial 
\pnp robots (e.g., Delta and SCARA), we show that our algorithmic 
solutions are computationally efficient and outperform the existing 
state-of-the-art including \algfifo and \algspt variants by 
$10\%$ to $40\%$ in real-time settings. Such improvement is significant 
when it comes to real-world applications, where a few percentages 
of efficiency gain could provide a company a large competitive edge. 
%

\noindent\textbf{Paper Organization}. The rest of the paper is organized 
as follows. The problem setting studied in the paper is explained in 
detail in Section~\ref{sec:problem}. In Section~\ref{sec:analysis},
we demonstrate the sub-optimality of greedy methods and estimate 
the maximum potential gain via optimization. 
Sections~\ref{sec:pnptime} and~\ref{sec:algorithm} detail our algorithmic 
development, with Sections~\ref{sec:pnptime} focusing on how to quickly 
obtain optimal \pnp time for complex robots and Section~\ref{sec:algorithm}
describing how we deal with the combinatorial explosion as we seek 
optimal finite-horizon solutions. A selection of our extensive evaluation 
effort of the algorithms is presented in Section~\ref{sec:experiments}, 
demonstrating the superior real-time performance of our proposed methods. 
We then conclude with Section~\ref{sec:conclusion}.

\section{Preliminaries}\label{sec:problem}
Consider a robotic pick-and-place (\pnp) system composed of a robot 
arm and a moving conveyor belt. Such systems  
\cite{chalasani1996algorithms,schrage1968letter} are generally modeled as residing 
in a two-dimensional bounded rectangular workspace $\W \in \R^2$. 
Let the base of the robot arm be located at $(x_A, y_A = 0)$. 
We assume that the reachable area on the 
conveyor by the robot end-effector for \pnp actions is an axis-aligned 
rectangle $\W$ with the lower left coordinate being $(x_L, y_B = 0)$ and 
upper right coordinate being $(x_R, y_T)$ (see. Fig.~\ref{fig:conveyor}). 
The task for the robot is to pick up objects located within $\W$ and drop 
them off at the origin $(x_D = 0, y_D = 0)$. We assume that the rest 
position of the end-effector is also at the drop-off location. Since the 
robot will execute a large number of \pnp actions in a single run, this 
assumption has little effect on optimality. The robot is assumed to know 
all object locations within $\W$. The assumption that $y_A = 
y_B = y_D = 0$ is for convenience and has no effect on computational 
complexity and has negligible effects on solution optimality. 

\begin{figure}[ht]
\begin{center}
\begin{overpic}[width={\iftwocolumn 3.5in \else 5in \fi},tics=5]
{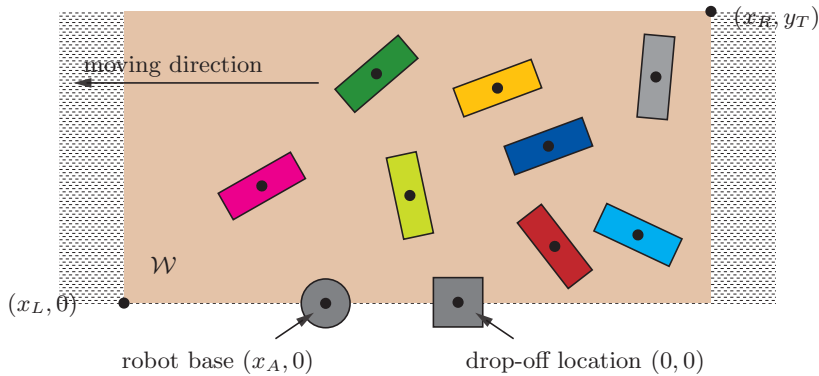}
\put(7, 3){{\small $(x_L, 0)$}}
\put(22, 7){{\small $\W$}}
\put(83, 33){{\small $(x_R, y_T)$}}
\put(19, -3){{\small robot base $(x_A, 0)$}}
\put(55, -3){{\small drop-off location $(0, 0)$}}
\put(15, 28){{\small moving direction}}
\end{overpic}
\end{center}
\caption{\label{fig:conveyor}Illustration of a conveyor workspace where 
the base of the robot arm is located at $(X_A,0)$. The end-effector picks 
up objects within a region $\W$ with a lower left corner of $(x_L, 0)$ 
and an upper right corner of $(x_R, y_T)$, and drops off objects at 
the drop-off location $(0, 0)$.}
\end{figure}

Without loss of generality, we assume that the conveyor belt moves at unit 
speed, e.g., $v_b = 1$, from the right toward the left. For the robot arm, 
we work with two types of motion models. In a {\em simplified model}, the 
end-effector of the arm is assumed to be able to extend or 
retract at a fixed speed $v_e > 1$. That is, the absolute speed of the 
end-effector along the straight line between the robot base and the 
end-effector location is $v_e$. In other words, the robot arm behaves 
like a telescoping arm. We use this 
model for the structural analysis as well as potions of the simulation 
studies.\footnote{We note that the algorithms we develop directly apply to Delta
and SCARA robots that are dominant in relevant industrial applications.
We use {\em accurate models} in Sections~\ref{sec:pnptime} for Delta and 
SCARA robots that consider both robot geometry and dynamics with bounded 
acceleration.}
Notice that in general, it is never beneficial 
for a robot to run at a lower speed in solving \pnp tasks.

For the study, it is assumed that the robot can pick up an object when 
its end-effector stops at the center of the object on the conveyor belt. 
The pickup action and the drop-off action are assumed to be instantaneous, 
i.e., they do not induce delays. We make such an assumption because the 
time involved in these actions is comparatively small in applications. 
Whereas we assume infinite acceleration and de-acceleration for the 
simplified telescoping robot model since it is a velocity based model,
as already mentioned, dynamics are carefully considered for Delta and SCARA 
robots. 
%
The overall goal is then to execute as many \pnp actions as possible
in a given amount of time. 

In developing the algorithms, we work with two object distribution models. 
Under a {\em one-shot} setting, we fix the dimensions of $\W$ and the 
number of objects $n$, and let the $n$ objects be uniformly distributed 
in a subset of $\W$. More precisely, the objects are spawned in a 
rectangular area with the same $y$ span as $\W$ and also the same 
maximum reach on the $x$ axis, i.e., both end at $x = x_R$ on the right. 
The left end of the object spawning area has an $x$ value larger than 
$x_L$ because otherwise, objects appearing close to $x_L$ may immediately 
move out of $\W$ on the conveyor, rendering it impossible to pick them. 
Under a {\em continuous} setting, which models after real application 
setups, the objects, following a some spatio-temporal distribution, appear 
at $x \ge x_R$ continuously for a period of time. For example, the 
distribution may be a Poisson process with rate $\lambda$ followed by a 
uniform distribution of $y \in (0, y_T)$. That is, as a new event is 
generated by the Poisson process, a new object is placed at $(x_R, y)$ 
where $y \in [0, y_T]$ is uniformly selected. 

Our proposed methods will be compared with greedy approaches, namely, \algfifo 
and \algspt \cite{mattone2000sorting}, which pick objects following simple 
heuristics. \algfifo follows the first-in first-out rule and always 
picks the object which enters the workspace the earliest, i.e., with the 
smallest $x$ location~\cite{mattone2000sorting}. On the other hand, 
\algspt always picks the object with the smallest \pnp time, following 
the {\em shortest processing time} rule~\cite{mattone2000sorting}. 
In addition, we add \alge which uses the Euclidean distance between an object's 
location and the drop-off location instead of 
$x$ for prioritizing. It is clear that these approaches require little 
computational effort.


\section{Analysis of the Optimal Solution Structure}\label{sec:analysis}
As greedy approaches (e.g., \algspt, \algfifo \cite{mattone2000sorting})
work with a very short horizon, it can be expected that they are generally 
sub-optimal. It appears, however, no quantitative analysis has been 
performed in the literature to study this sub-optimality. Thus, we begin 
our study with an analytical characterization on the benefit of using 
a longer horizon for optimization. 

\subsection{Non-Optimality of Greedy Strategies}\label{subsec:31}
As mentioned in the introduction, the most commonly used heuristics 
for \pnp appear to be FIFO and SPT \cite{mattone2000sorting}. Whereas 
these best-first like heuristics runs in $O(n)$ time for selecting a
single picking candidate with $n$ being the number of objects accessible 
on the conveyor, the overall solution is generally sub-optimal in terms
of efficiency over long term. We now establish this under a fairly 
general setting through examining the \pnp of two objects $o_1$ and 
$o_2$ located at $(x_1, y_1)$ and $(x_2, y_2)$, respectively 
(Fig.~\ref{fig:2objs}). For the analysis, we further assume that $x_A 
= x_D = 0$, i.e., the robot base and drop off location are the same 
(we note that the analysis that follows readily generalizes beyond 
this assumption). 

\begin{figure}[ht!]
\begin{center}
\begin{overpic}[width={\iftwocolumn 3.5in \else 5in \fi},tics=5]
{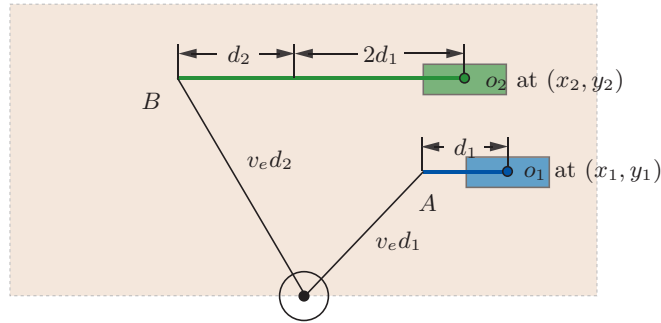}
\put(69,25.3){{\small $o_2$ at $(x_2, y_2)$}}
\put(73,15.9){{\small $o_1$ at $(x_1, y_1)$}}
\put(65.7,18.2){{\small $d_1$}}
\put(42,28){{\small $d_2$}}
\put(56.2,28){{\small $2d_1$}}
\put(57.5,8.5){{\small $v_ed_1$}}
\put(44,17){{\small $v_ed_2$}}
\put(62,12.4){{\small $A$}}
\put(33,23){{\small $B$}}
\end{overpic}
\end{center}
\caption{\label{fig:2objs}The relevant distances when object $o_1$, 
initially located at $(x_1, y_1)$, is picked up first by the robot.}
\end{figure}

Under the setup, we can compute the time it takes the end-effector to 
carry out \pnp actions on the two objects for the two possible picking 
orders (i.e., first picking the object $o_1$ or first picking the object 
$o_2$). Assuming that object $o_1$ is picked first, we can compute the 
earliest location $A$ where the end-effector can pick up the object. At 
$A$, object $o_1$ has traveled some distance $d_1$; the end-effector, 
with speed $v_e$, would have traveled a distance of $v_ed_1$. We then 
have a single unknown $d_1$ and the quadratic equation 
$
(x_1 - d_1)^2 + y_1^2 = v_e^2d_1^2,
$
with which we can solve for $d_1$. We omit the solution here, which is 
lengthy to write down. Based on $d_1$ and similar reasoning, we can compute the point 
$B$ where the end-effector can pick up the second object after dropping 
the first object at the origin. At this point, the second object would 
have traveled a distance of $2d_1 + d_2$ for some $d_2$. The equation 
for computing $d_2$ is readily obtained as 
$
(2d_1 + d_2 - x_2)^2 + y_2^2 = v_e^2d_2^2.
$

The total time required for handling both objects this way is 
$2(d_1 + d_2)$ since the conveyor runs at unit speed. We denote this 
time as $t_{12}(x_1, x_2, y_1, y_2, v_e)$. Similarly, we may compute 
the time required if $o_2$ is picked up first; denote the time 
as $t_{21}(x_1, x_2, y_1, y_2, v_e)$. It can be shown that,
setting $x_1 = x_2$, there is no general dominance between $t_{12}$
and $t_{21}$. 

\begin{proposition}\label{p:ce}
For two objects $o_1$ and $o_2$ initially located at $(x, y_1)$ and 
$(x, y_2)$, the optimal pick-and-place sequence of the objects depends 
on the horizontal offset $x$. 
\end{proposition}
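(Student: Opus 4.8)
The plan is to prove the claim by showing that the comparison between $t_{12}$ and $t_{21}$ reverses as $x$ ranges over $[0,\infty)$, so that no single ordering can be optimal for every offset. Both $t_{12}$ and $t_{21}$ are continuous functions of $x$ (each is built by composing the positive quadratic-root solutions for $d_1$ and then $d_2$, and these depend continuously on $x$ for fixed $y_1,y_2,v_e$). Hence it suffices to exhibit two values of $x$ at which $\mathrm{sign}(t_{12}-t_{21})$ differs: the intermediate value theorem then forces a crossover offset, with $o_1$-first optimal on one side and $o_2$-first optimal on the other.

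First I would solve the two quadratics explicitly. From $(x-d_1)^2+y_1^2=v_e^2 d_1^2$ I take the admissible positive root $d_1=\big(-x+\sqrt{v_e^2x^2+(v_e^2-1)y_1^2}\big)/(v_e^2-1)$ and substitute it into $(2d_1+d_2-x)^2+y_2^2=v_e^2 d_2^2$ to obtain $d_2$, giving $t_{12}=2(d_1+d_2)$; the expression for $t_{21}$ is identical with the roles of $y_1$ and $y_2$ interchanged. Without loss of generality I would then take $y_1<y_2$ and analyze the two endpoints of the range. At $x=0$ the first equation collapses to $d_1=y_i/\sqrt{v_e^2-1}$, so the nearer object (smaller $y$) is reached after a strictly shorter first leg, and a short computation shows $t_{12}<t_{21}$, i.e., picking the nearer object $o_1$ first is strictly better.

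For large $x$, each $d$ has leading behavior $x/(v_e+1)$ independent of $y$, so the sign of $t_{12}-t_{21}$ is governed entirely by the $y$-dependent lower-order corrections; evaluating at a concrete large offset (for instance $x=100$, $v_e=2$, $y_1=1$, $y_2=2$) shows that here $t_{21}<t_{12}$, reversing the preference in favor of picking the farther object $o_2$ first. Combining these two sign computations with the continuity of $t_{12}$ and $t_{21}$ yields a crossover value $x^{\ast}\in(0,\infty)$ across which the optimal ordering switches, which is exactly the asserted dependence on $x$.

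The main obstacle is the large-$x$ regime. Because the dominant terms of $t_{12}$ and $t_{21}$ coincide, the reversal is decided by the correction terms, and the nesting of $d_1$ inside the equation for $d_2$ makes a full asymptotic expansion algebraically heavy and error-prone. The cleanest execution is therefore to avoid pushing the expansion through symbolically and instead pin the reversal down at a single explicit large offset, relying on continuity (rather than a derivative or monotonicity argument) to guarantee the crossover.
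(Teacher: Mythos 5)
Your proof is correct, and its core reduction is the same as the paper's: fix $y_1$, $y_2$, $v_e$, set $x_1 = x_2 = x$, and show that the sign of $\delta t(x) = t_{12} - t_{21}$ changes as $x$ varies. Where you differ is in how the sign change is certified. The paper picks concrete constants ($y_1 = 0.4$, $y_2 = 0.7$, $v_e = 2$) and numerically solves $f(x) = \delta t(x,x,0.4,0.7,2) = 0$, reporting a unique positive root $x_0 \approx 0.65$ across which the preference flips (backed by a plot of $f$). You instead evaluate the sign at two offsets and invoke continuity plus the intermediate value theorem. Both of your endpoint claims check out: at $x = 0$ one gets $d_1 = y_i/\sqrt{v_e^2-1}$, and the total time for picking the object at height $p$ first and height $q$ second is proportional to $p(v_e^2+1) + \sqrt{4v_e^2p^2 + (v_e^2-1)^2q^2}$, which is strictly smaller when $p < q$ (this holds for every $v_e > 1$, including the regime $v_e > 1+\sqrt{2}$ where the square-root term favors the opposite order, since $\bigl(v_e^4-1\bigr)(p+q)$ dominates $\bigl(v_e^4-6v_e^2+1\bigr)(p+q)$); and at your large-offset example ($x=100$, $v_e=2$, $y_1=1$, $y_2=2$) one indeed finds $t_{12} \approx 88.95 > t_{21} \approx 88.91$. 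Note that for the proposition as stated you do not actually need continuity or the IVT at all: two offsets with different optimal orders already establish the dependence on $x$, so that part of your argument is harmless extra structure. What your route buys is robustness and generality—no need to certify that a numerically located root is genuine and unique, and your $x=0$ analysis is valid for all $y_1 < y_2$ and $v_e > 1$ rather than one parameter choice. What the paper's route buys is the sharper quantitative picture of a single threshold $x_0$ separating the two regimes, which it then exploits in its ``Practical Concerns'' discussion to bound the worst-case optimality loss of each greedy choice.
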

\begin{proof}
We define a function $\delta t$ as 
\begin{align*}\label{eq:delta-t}
\delta t(x_1, x_2, y_1, y_2, v_e) = t_{12}(x_1, x_2, y_1, y_2, v_e) \\ - 
t_{21}(x_1, x_2, y_1, y_2, v_e).
\end{align*}
To prove the proposition, we only need to show that for some fixed $y_1, 
y_2$, and $v_e$, varying $x = x_1 = x_2$ will flip the sign of the 
function $\delta t(\cdot)$. For this purpose, we let $y_1 = 0.4, 
y_2 = 0.7$ and $v_e = 2$ and examine 
\[
f(x) = \delta t(x, x, 0.4, 0.7, 2). 
\]

Solving for $f(x) = 0$ with the restriction of $x > 0$ yields a single 
solution $x_0 \approx 0.65$. This means that when $x > x_0$ holds, it is 
more optimal to pick $o_2$ first. When $x < x_0$, it is more optimal to 
pick $o_1$ first. 
\end{proof}

Continuing from the proof of Proposition~\ref{p:ce}, if we plot $f(x)$ over 
$x \in [0.4, 1.4]$, Fig.~\ref{fig:fx} is obtained which clearly shows that 
in this case, picking $o_1$ first is only better when initial $x$ is 
less than $0.65$. It also shows that $\delta t$ can have relatively 
large positive and negative values, meaning that the conclusion of  
Proposition~\ref{p:ce} holds for proper $x_1 \ne x_2$ and also for $x_A 
\ne x_D$ when other conditions are proper. 
\begin{figure}[ht]
\begin{center}
\begin{overpic}[width={\iftwocolumn 3.5in \else 5in \fi},tics=5]
{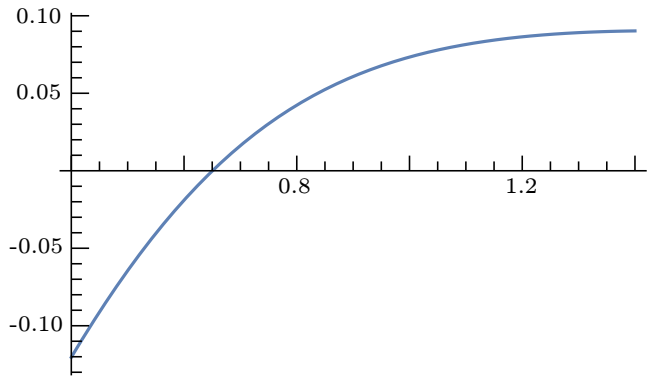}
\end{overpic}
\end{center}
\caption{\label{fig:fx}Plot of $f(x) = \delta t(x, x, 0.4, 0.7, 2)$ 
for $x \in [0.4,1.4]$.}
\end{figure}

{\em Practical Concerns}. As one might expect, as indicated by the SPT 
rule, it is generally better if the closer object is picked first. In 
the example from the proof of Proposition~\ref{p:ce}, the maximum value of 
$f(x)$ is reached when $x \approx 1.45$, which yields $f(1.45) \approx 
0.09$. The value of $t_{21}$ in this case is approximately $0.77$ 
(therefore, $t_{12} \approx 0.86$). That is, picking $o_2$ first in 
this case will lead to at most an optimality loss of $0.09/0.77 
\approx 12\%$. On the other hand, always picking $o_2$ first can lead 
to an optimality loss of about $40\%$ when $x \approx -0.1$. Nevertheless, 
a $12\%$ of optimality loss is very significant and should 
be avoided in practice whenever possible. Lastly, since the analysis 
is based on $x_1 = x_2$, it directly applies to the FIFO setting as well,
where the more sub-optimal choice of picking $o_2$ first can happen if
$x_1$ is just slightly larger than $x_2$. 

\subsection{Structure of Optimal \pnp Solutions}
The analysis and resulting observation from Section~\ref{subsec:31} 
leads us to develop optimal \pnp solutions via exhaustive search 
(see Section~\ref{subsec:41}). 
Through running the exhaustive search algorithm, we computed optimal 
solutions under various configurations and studied the distribution 
pattern of the optimal picking sequences. A typical outcome under 
the one-shot setting is illustrated in Fig.~\ref{fig:opt-distr}. In 
the figure, the small discs correspond to the initial locations of 
objects for $100$ problems with $n = 10$ each, with $2 \le x \le 8$,
$0 \le y \le 3$, and $v_e = 5$. After computing the optimal solution 
for each of the $100$ problems, we color the object that is picked 
first (out of the $10$ objects in a problem instance) dark red and 
the last picked object dark blue. The colors for the other eight 
objects are interpolated between these two. The majority of objects 
are picked before their $x$ coordinates fall below $x = -2$.  
\begin{figure}[ht]
\begin{center}
\begin{overpic}[width={\iftwocolumn 3.4in \else 4in \fi},tics=5]
{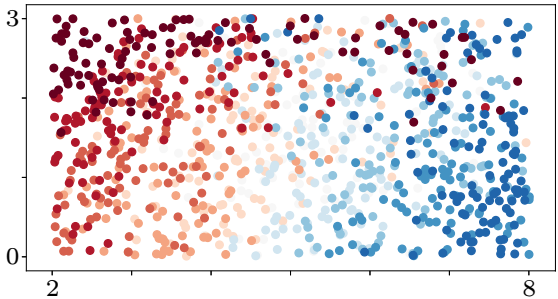}
\end{overpic}
\end{center}
\caption{\label{fig:opt-distr} Illustration of picking orders of $10$ 
objects in $100$ optimal solutions.}
\end{figure}

From the figure, we make the observation that the first few objects that 
get picked are concentrated toward the left, though a few are also from 
the far right. Generally, however, they all have relatively large $y$ 
coordinates. The last few objects, on the other hand, fall more on the 
far right and are not concentrated in terms of the $y$ coordinate. The 
objects that are picked in the middle in an optimal sequence tend to 
fall in the middle, which more or less is as expected.

\section{Computation of Shortest \pnp Time}\label{sec:pnptime}
A significant challenge in the design and implementation of object 
picking sequence selection algorithms is how to deal with the geometry 
and dynamics of the robots (see, e.g., \cite{carp2003dynamic}) that are 
involved. We encapsulate the complexity caused by robot geometry and 
dynamics in a routine, \getpnp, that returns the best available \pnp 
time for a given robot model and the initial location of the moving 
object to be picked up. This is achieved through a two-step process. 
First, a principled method is designed for estimating a single optimal 
\pnp time. Second, we build a table of pre-computed \pnp times to enable 
real-time look-up in practice. 

\subsection{Computing Shortest \pnp Time for Simple Robots}
If the robot has trivial dynamics (note that this is a BIG if that
almost never happens in practice), it may be possible for \getpnp to 
compute the \pnp time directly and analytically. In the case of the simplified 
telescoping robot, we may do so via solving the quadratic equation 
\[(\sqrt{x_A^2 + y_A^2} \pm v_e t)^2 = [(x - v_b t) - x_A]^2 + (y - y_A)^2.\]
The sign of $v_e t$ depends on whether the arm extends or retracts, which 
directly correlates to whether the object's current location $(x, y)$ is 
in the circle centered at $(x_A, y_A)$ with radius $\sqrt{x_A^2 + y_A^2}$ 
(see Fig.~\ref{fig:getpnp}). In this case, the arm extension and retraction
take the same amount of time. 

\begin{figure}[ht]
    \begin{center}
    \begin{overpic}[width={\iftwocolumn 3.3in \else 4in \fi},tics=5]
    {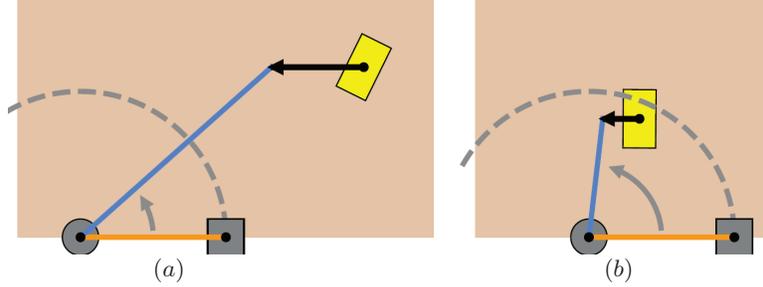}
    \put(19, -3){{\small $(a)$}}
    \put(78, -3){{\small $(b)$}}
    \end{overpic}
    \end{center}
    \caption{\label{fig:getpnp} Scenarios when robot $(a)$ extends and $(b)$ 
    contracts the arm. The orange and blue lines illustrate the drop-off and 
    pick-up arm poses, respectively. the arced arrows show the rotation 
    movements of the arm.}
\end{figure}

\subsection{Computing Shortest \pnp Time for Complex Robots}
Computing optimal \pnp time is hard in general as most robots have 
complex, interacting geometric constraints and  physical 
constraints including robot kinematics, speed/acceleration limits, and 
so on. For a given robot model, e.g., SCARA, we first need a method for 
computing the optimal (shortest) time it takes for the end-effector to 
reach a point $(x, y)$ within the robot's workspace and then the optimal 
time for the end-effector to return to the drop-off location. The sum of 
the two times is the optimal total \pnp time. In practice, optimal \pnp 
times are estimated \cite{carp2003dynamic}. 

Our implementation of the shortest \pnp time computation for a single 
2D point is as follows. First, based on the robot's geometric structure, 
we compute the joint angles of the robot (two for SCARA and three for Delta 
\cite{clavel1988fast,carp2003dynamic}) at the initial (drop-off) 
end-effector location and the target pick-up location $(x, y)$. Then, 
we invoke the Reflexxes Motion Library \cite{zsombor2004descriptive} to 
obtain an estimated shortest transition time from the drop-off pose to 
the pick-up pose and also the shortest transition time from the pick-up 
pose to the drop-off pose, which may be different. 
We denote this time as $t(x, y)$, from which we can readily obtain  
$(x + v_bt(x,y), y)$ as the location where the object is before the 
end-effector starts the \pnp operation. 

Because each computation of $t(x, y)$ can be relatively time consuming 
(easier for SCARA and slightly more involved for Delta with 3 degrees of 
freedom), the procedure cannot directly be used for real-time robot 
operations. Instead, we build a table of pre-computed \pnp times at a 
given resolution (in this paper, a $100 \times 100$ discretization is 
used, with interpolation), with which \getpnp can then be realized 
extremely efficiently with very high precision. 

\subsection{Visualizing Typical \pnp Time Profiles}
The \getpnp subroutine can be readily adapted to work with other 
robots. That is, \getpnp is an abstraction layer that isolates the object
picking sequence selection from physical robot models. It is clear that
different robots can have significantly different \pnp time structure. 
For the three robots that are examined in this paper, their \pnp time 
profiles are shown in Fig.~\ref{fig:profiles}. We note that the (rotated) profiles 
are slightly truncated at the bottom. Similarly, there are some values
missing at the top of the figures; this is because objects initially  
located in these areas will exit workspace before the arm can reach them.

\begin{figure}[ht]
    \begin{center}
    \begin{overpic}[width={\iftwocolumn 1.1in \else 1.5in \fi},tics=5]
    {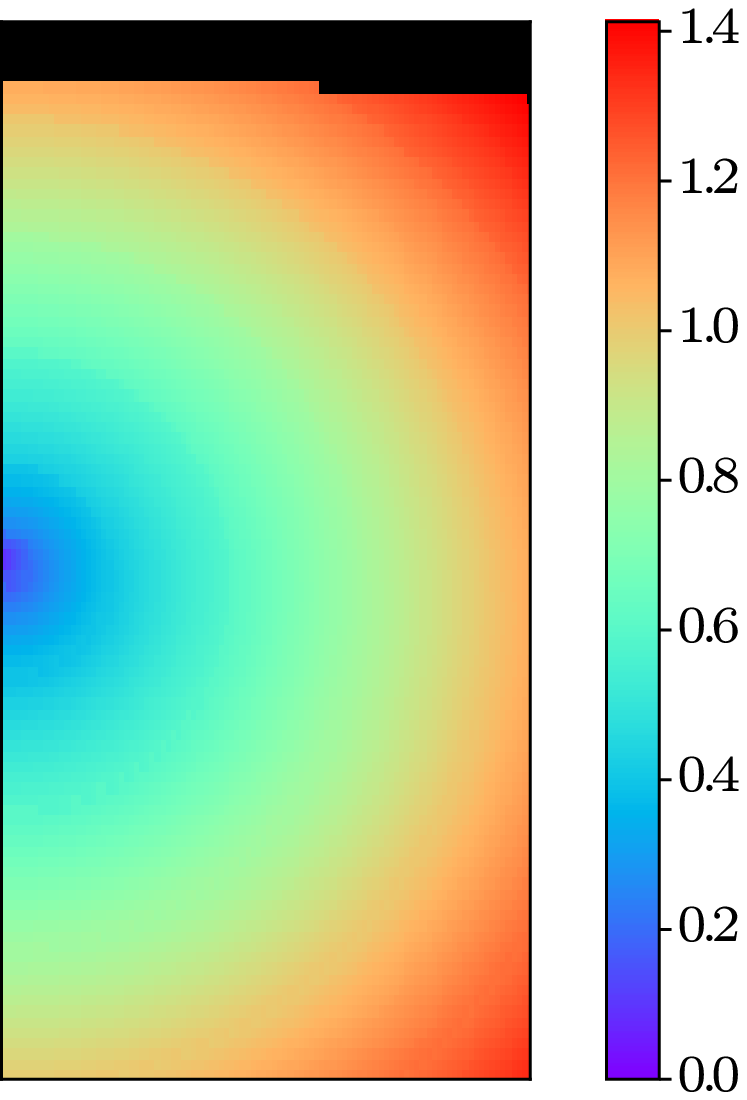}
		\put(27, -7){{\footnotesize (a)}}
    \end{overpic}
    \begin{overpic}[width={\iftwocolumn 1.1in \else 1.5in \fi},tics=5]
    {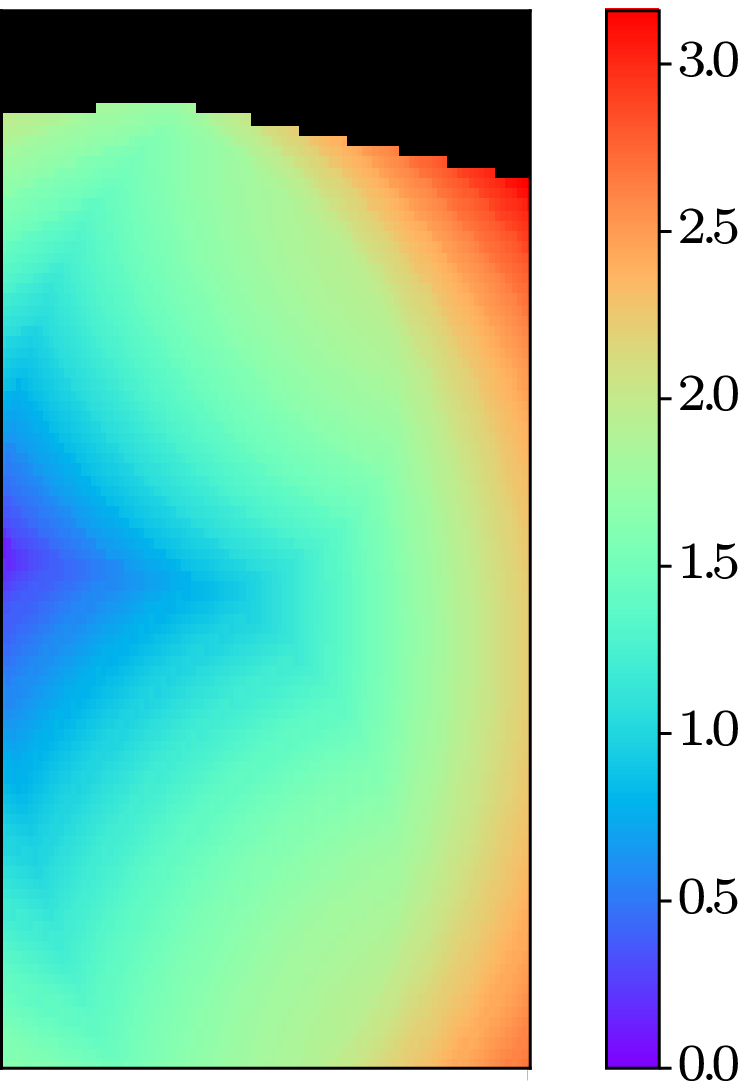}
		\put(27, -7){{\footnotesize (b)}}
    \end{overpic}
    \begin{overpic}[width={\iftwocolumn 1.1in \else 1.5in \fi},tics=5]
    {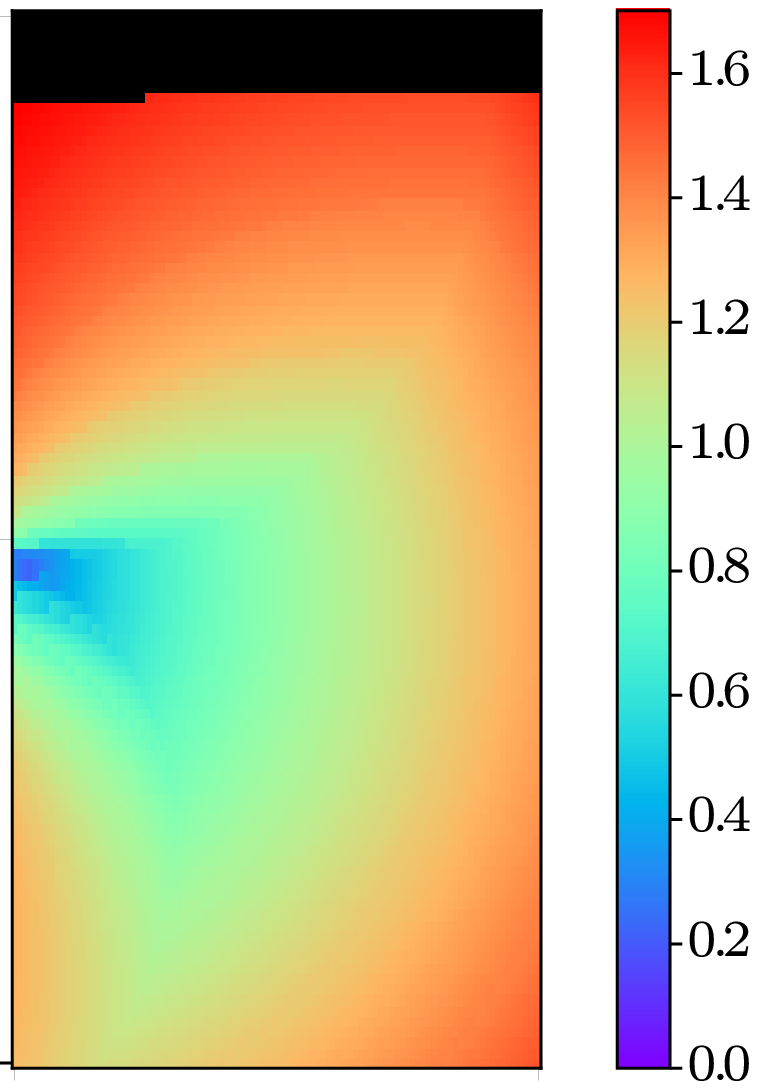}
		\put(27, -7){{\footnotesize (c)}}
    \end{overpic}
    \end{center}
    \caption{\label{fig:profiles} The (relative) time profile for \pnp
		operations for difference arms. The workspace is rotated $90$ degrees
		clockwise. (a) The simplified telescoping robot. (b) The Delta robot.
		(c) The SCARA robot.}
\end{figure}

\section{Exact And Approximate Algorithms for Selecting The Best Picking Sequence}\label{sec:algorithm}
The main algorithm developed in this work is an {\em exhaustive-search}-based 
method which checks all possible object picking sequences to find 
the optimal one. In addition, a {\em local-augmentation}-based method 
is developed to further boost computational efficiency.

\subsection{Exhaustive Search Methods: \algperm and \algdp}\label{subsec:41}
With the \getpnp routine, a baseline exhaustive search routine is 
straightforward to obtain. We call such a routine \algperm, which computes 
the optimal object picking sequence for a given horizon (i.e., number of objects
examined at a time). Then, {\em dynamic programming} is applied to speed up 
\algperm, yielding the routine \algdp, which is significantly faster yet 
without any loss of optimality. A similar application of dynamic programming 
in the robotics for a different application can be found in \cite{TanYu19ISRR}.

\subsubsection{\algperm} 
As it is shown in Alg.~\ref{alg:perm}, \algperm iterates through all 
permutations of objects (line~\ref{alg:perm:all}) and finds the picking 
sequence with the minimum execution time 
(lines~\ref{alg:perm:calculate}--\ref{alg:perm:compare}). The computation 
time of \algperm is $O(n!n)$, since there are $n!$ permutations to check and 
for each permutation, the algorithm calls \getpnp $n$ times. 

\begin{algorithm}
    \DontPrintSemicolon
    \KwIn{objects' initial location $(x_1, y_1), \dots, (x_n, y_n)$}
    \KwOut{$S^*$: a time-optimal \pnp sequence}
    $t^* \gets \infty, S^* \gets \text{none}$\;
    \For{$P \in $ \textsc{AllPermutations($\{1, \dots, n\}$)}}{ \label{alg:perm:all}
        $t \gets 0$\; \label{alg:perm:calculate}
        \lFor{$i \in P$}{$t \gets t$ $+$ \getpnp($x_i - v_b t, y_i$)}
        \lIf{$t < t^*$}{$t^* \gets t, S^* \gets P$} \label{alg:perm:compare}
    }
    \Return $S^*$\;
    \caption{\algperm}\label{alg:perm}
\end{algorithm}

\subsubsection{\algdp} 
Clearly, \algperm contains redundant computation. For example, for four 
objects, the time for first picking up objects $(1, 2)$ is calculated twice, 
during the computation for sequences $(1, 2, 3, 4)$ and $(1, 2, 4, 3)$. 
To avoid such redundant calculations, we propose \algdp, a dynamic programming 
algorithm similar to that in~\cite{held1962dynamic}. 
The pseudo code for \algdp is provided in Alg.~\ref{alg:dp}. In
line~\ref{alg:dp:init}, two datasets are initialized: $S$ which
will contain the time-optimal picking sequences of all the $2^n$ subsets of objects, and $T$ 
which will contain the associated time costs. An $n$-step iterative process 
starts from line~\ref{alg:dp:iteration}. In line~\ref{alg:dp:update}, the 
algorithm updates $T$ amd $S$ for each $k$-combination $U$. The update 
process iterates through all objects $i \in U$, finds the one that minimizes 
the execution time when picked last:
\sh{ equation too long, will fix it later.}
\begin{multline*}
    T[U] = \min_{i \in U} \{\,T[U \backslash \{i\}] + \\ \getpnp(x_i - v_b \, T[U \backslash \{i\}], y_i)\,\}.
\end{multline*}
During this process, $S[U]$ is also updated accordingly to store the 
subsets' optimal picking sequence. Finally, in line~\ref{alg:dp:return}, a time-optimal \pnp sequence 
of all $n$ objects is returned.

\begin{algorithm}
    \DontPrintSemicolon
    \KwIn{objects' initial location $(x_1, y_1), \dots, (x_n, y_n)$}
    \KwOut{a time-optimal \pnp sequence}
    $T = \{\varnothing : 0\}, S = \{\varnothing : ()\}$\; \label{alg:dp:init}
    \For{$1 \leq k \leq n$}{ \label{alg:dp:iteration}
        \For{$U \gets$ \textsc{AllCombinations($\{1, \dots, n\}, k$)}}{ \label{alg:dp:combinations}
            \textsc{Update($T$, $S$, $U$)} \label{alg:dp:update}
        }
    }
    \Return $S[\{1, \dots, n\}]$\; \label{alg:dp:return}
    \caption{\algdp}\label{alg:dp}
\end{algorithm}
\jy{It is usually common to have a back trace to get the optimal solution.
Make sure the pseudo code is correct. I will leave this to you guys.}

\begin{proposition}
    \algdp finds the optimal \pnp sequence.
\end{proposition}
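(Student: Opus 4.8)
The plan is to prove correctness by induction on the subset size $k = |U|$, following the standard Bellman--Held--Karp argument but paying attention to the one feature that makes this setting non-standard: the cost of each pick depends on the \emph{accumulated} elapsed time. The observation I would establish first is that, because every object translates on the belt at the same speed $v_b$, the state of the system at the moment the end-effector is free is captured entirely by a single scalar, the total time $t$ spent so far; an object initially at $(x_i,y_i)$ is then at $(x_i - v_b t, y_i)$, independent of which objects were handled or in what order. Consequently, if I let $T[U]$ denote the minimum time to service all objects of $U$ over all orderings, the cost of servicing $i$ \emph{last} is exactly $\getpnp(x_i - v_b\,t,\,y_i)$ evaluated at the time $t$ at which the prefix $U\setminus\{i\}$ finishes.

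The heart of the proof is a monotonicity lemma: the function
\[
g_i(t) \;=\; t + \getpnp(x_i - v_b\,t,\, y_i)
\]
is non-decreasing in $t$, i.e., postponing the start of a \pnp action never lets the robot finish earlier. For the telescoping model I would verify this directly by solving the interception quadratic $((x_i - v_b t) - d)^2 + y_i^2 = v_e^2 d^2$ for $d(t)$ and differentiating $g_i(t) = t + 2d(t)$; after clearing denominators the derivative reduces to a manifestly non-negative expression, the non-negativity being driven by $v_e > v_b$ (and degenerating to zero only at $v_e = v_b$). This matches the earlier remark that running the arm slower is never beneficial. For the accurate Delta/SCARA models I would treat the same monotonicity as the structural property under which \getpnp is invoked, arguing it physically from the end-effector being strictly faster than the belt; this is the step I expect to require the most care, since the DP's optimal-substructure property rests entirely on it.

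With the lemma in hand, the inductive step proves the recurrence
\[
T[U] = \min_{i \in U}\bigl\{\,T[U\setminus\{i\}] + \getpnp\bigl(x_i - v_b\,T[U\setminus\{i\}],\, y_i\bigr)\bigr\}
\]
by two inequalities. For ``$\le$'', any ordering of $U\setminus\{i\}$ attaining $T[U\setminus\{i\}]$ can be extended by appending $i$, producing a feasible ordering of $U$ whose time equals the $i$-th term on the right; hence $T[U]$ is bounded above by every term and thus by their minimum. For ``$\ge$'', I take an optimal ordering of $U$, let $i$ be its last object, and delete $i$ to obtain an ordering of $U\setminus\{i\}$ whose finishing time $t'$ satisfies $t' \ge T[U\setminus\{i\}]$; the total time is $g_i(t')$, and monotonicity gives $g_i(t') \ge g_i(T[U\setminus\{i\}])$, which is exactly the $i$-th term. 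Without the lemma a smaller $T[U\setminus\{i\}]$ would not be guaranteed to yield a smaller completion time for $U$, and this direction would fail.

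Finally I would tie the recurrence to the pseudocode: the outer loop over increasing $k$ in Alg.~\ref{alg:dp} ensures that every $T[U\setminus\{i\}]$, a set of size $k-1$, is already computed before $T[U]$ is processed, so \textsc{Update} evaluates exactly the recurrence above and records the minimizing last object in $S[U]$. Since $T[\varnothing]=0$ furnishes the base case, induction yields that $T[\{1,\dots,n\}]$ equals the minimum completion time over all orderings and that $S[\{1,\dots,n\}]$ realizes it, which is the claimed optimal \pnp sequence.
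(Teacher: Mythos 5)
Your proof is correct, and its skeleton --- induction on subset size with the Held--Karp recurrence --- is the same as the paper's. What you do differently is isolate and prove the fact on which that induction secretly rests: the paper's entire justification is the one-line assertion that an optimal sequence for a $k$-combination ``must also be optimal when picking the first $k-1$ objects,'' and this optimal-substructure claim is not self-evident here, because the cost of a pick depends on when it starts. Your monotonicity lemma, that
\[
g_i(t) \;=\; t + \getpnp(x_i - v_b\, t,\, y_i)
\]
is non-decreasing in $t$, is precisely what makes the ``$\ge$'' half of the recurrence go through: note that $\getpnp(x_i - v_b t, y_i)$ alone is \emph{not} monotone in $t$ (an object can drift toward the arm before drifting away), so a faster prefix does not trivially imply a faster completion, and without the lemma the DP could in principle miss solutions whose prefixes are deliberately slow. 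Your explicit verification for the telescoping model (where one gets $g_i'(t) \ge (v_e-1)/(v_e+1) \ge 0$, with equality only in the degenerate case $v_e = v_b$) and your honest flagging of the property as a modeling assumption for the Delta and SCARA robots --- where \getpnp is a tabulated, simulated quantity rather than a closed form --- supply exactly what the paper's two-sentence proof leaves implicit. The same goes for your observation that elapsed time is a sufficient state summary because every \pnp action starts and ends at the drop-off location, which is what lets a single scalar index the DP states at all. In short: same decomposition as the paper, but your write-up is the complete argument of which the paper's proof is a sketch, and the monotonicity lemma you add is the step a careful referee would have asked the authors for.
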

\begin{proof}
    Since it is trivial that $S$ contains the optimal picking sequence of the
    $0$-combination of the objects, 
    it suffices to show that given the optimal sequences of all 
    $(k - 1)$-combinations, then the function 
    \textsc{Update}($T, S, U$) in line~\ref{alg:dp:update} calculates the optimal 
    sequences of all $k$-combinations. 

    Given $U$ as an arbitrary $k$-combination, its optimal picking sequence must 
    also be optimal when picking the first $k - 1$ objects in this sequence. 
    The update process of $T[U]$ checks all candidate sequences with the first 
    $k - 1$ objects picked in a time-optimal manner.
\end{proof}

\algdp runs in $O(2^n n)$: there are $O(2^n)$ object subsets, and processing 
a combination $U$ calls the routine \getpnp $|U|$ times, taking $O(n)$ time. 
Since $2^n \ll n!$ for large $n$, \algdp is much faster than \algperm. 

\subsection{A Local Augmentation Method: \alglocal} 

While working on \algdp, we attempted many heuristics to further boost 
its efficiency. Here, we report a particularly effective method 
that appears to achieve optimality close to \algdp but scales much better. 
We call this local augmentation-based method \alglocal, which uses \algdp 
as a subroutine. The pseudocode of \alglocal is provided in 
Alg.~\ref{alg:local}. In line~\ref{alg:local:init}, \alglocal starts with 
an initial picking sequence $S$, which can be selected 
in many different ways, for example, using FIFO. Then, 
line~\ref{alg:local:m1}-\ref{alg:local:m2} repeatedly call \algdp over 
sub-sequences of $S$ to reduce the execution time. Specifically, the 
algorithm has two parameters $m_1$ and $m_2$. The main loop is repeated 
$m_1$ times, and for each iteration, we call \algdp over the 
$k\textsuperscript{th}$ to $(k + m_2)\textsuperscript{th}$ elements of $S$ 
for $1 \leq k \leq n - m_2$.

The computation time for \alglocal is $O(m_1 n 2^{m_2} m_2)$. 
In our implementation, we found that initializing $S$ using \algfifo and assigning 
$m_1 = n, m_2 = 9$ produce solutions that are often indistinguishable from 
these computed by \algdp: in all test cases, the average performance difference 
between \alglocal and \algdp never exceeds $0.05\%$.

\begin{algorithm}
    \DontPrintSemicolon
    \KwIn{objects' initial location $(x_1, y_1), \dots, (x_n, y_n)$}
    \KwOut{a near-optimal \pnp sequence}
    $S \gets$ \textsc{GetInitialPickingSequence()}\; \label{alg:local:init}
    \For{$m_1$ times}{ \label{alg:local:m1}
        $t \gets 0$\;
        \For{$1 \leq k \leq n - m_2$}{
            $O \gets \varnothing$\; \label{alg:perm:calculate}
            \For{$i \in S[k:k + m_2]$}{$O \gets O \cup \{(x_i - v_b t, y_i)\}$}
            $S[k:k + m_2] \gets$ \algdp($O$)\;
            $t \gets t$ $+$ \getpnp($x_{S[k]} - v_b t, y_{S[k]}$)\;
            } \label{alg:local:m2}
    }
    \Return $S$\; \label{alg:local:return}
    \caption{\alglocal}\label{alg:local}
\end{algorithm}

\begin{remark}[Adapting to a conveyor setting]
Since it is expected that a conveyor will run without stopping for 
extended periods of time, for the continuous setting, \algdp or \alglocal 
are invoked repeatedly with real-time locations of all the pick-able 
objects in the workspace. 
\end{remark}

\section{Experimental Studies}\label{sec:experiments}
\setlength\tabcolsep{6pt} 
\def\arraystretch{1.2} 

We performed an extensive evaluation of the newly developed algorithms. 
In this section, we present a small subset of the evaluation that is 
most representative. 
In Section~\ref{subsec:time}, we measure the computation time of the 
algorithms under the one-shot setting and conclusively show that our 
algorithms are sufficiently fast for industrial applications. 
In Section~\ref{subsec:one-simple}, we focus on the one-shot setting
and check how much execution time savings are possible. Selected results 
demonstrate that the projected efficiency gain of our algorithms are 
very significant across different robot models. 
Finally, in Section~\ref{subsec:continuous}, we evaluate the performance 
of the SCARA robot in realistic continuous conveyor settings under two 
different object arrival distribution models. Again, our proposed 
algorithm shows a clear lead. 

All algorithms are implemented in 
C\nolinebreak[4]\hspace{-.05em}\raisebox{.4ex}{\small\bf ++}, and all 
experiments are executed on an Intel\textsuperscript{\textregistered} 
Xeon\textsuperscript{\textregistered} CPU at 3.0GHz. 

\subsection{Computational Efficiency} \label{subsec:time}
Fig.~\ref{fig:computation-time} shows the computation time of our original 
algorithms versus the number of objects $n$. 
With dynamic programming, we can solve much larger problem instances 
(near-)optimally: in around one second, \algperm, \algdp, \alglocal can 
solve one-shot problems with $10$, $22$ and $100$ objects, respectively. 
We note that, while the test is done over the simplified robot model, 
similar performance is observed for Delta and SCARA robots. 
The shape of \alglocal is due to the choice of parameters (i.e. $m_2 = 9$).

\begin{figure}[ht]
    \begin{center}
    \includegraphics[width={\iftwocolumn 3.45in \else 4in \fi},tics=5]{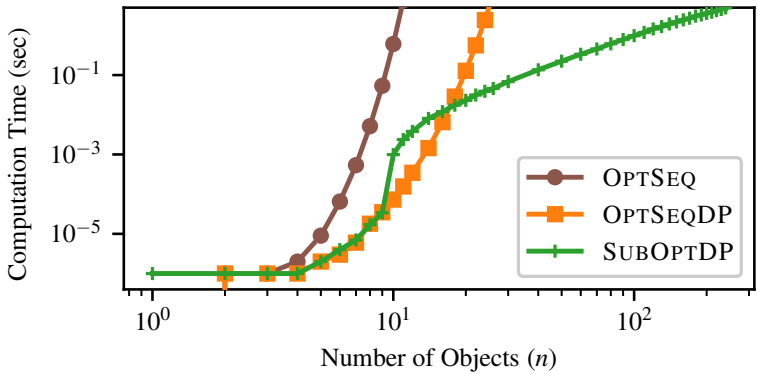}
    \end{center}
    \caption{\label{fig:computation-time} Average computation time of our 
    algorithms versus the number of objects $n$. In all experiments, 
    the error is within $\pm 2 \%$.}
\end{figure}

Our observation indicates that the active workspace in a conveyor \pnp 
system contains a few to low tens of objects. From the figure, we observe 
that both \algdp and \alglocal can complete a single sequence computation 
for ten objects within $10^{-4}$ seconds and fifteen objects with $10^{-2}$ 
seconds. Because, Delta and SCARA-based \pnp systems generally do not pick 
more than a single digit number of objects per second, \algdp and \alglocal 
impose negligible time overhead. As such, they are sufficiently fast for 
the target industrial applications. 

\subsection{\pnp Performance under One-Shot Setting} \label{subsec:one-simple}
Having shown that \algdp and \alglocal are sufficiently fast, next, we 
present their performance in one-shot settings where only a single batch 
of objects are handled. In a typical setting, we let $x_L = -5, x_R = 5, 
y_B = 0$, and $y_T = 5$, i.e., the workspace is a $10\times 5$ rectangle. 
All objects are initially uniformly randomly placed between 
$x = 3$ and $x=5$. Robots are configured so that they can reach anywhere 
within the workspace but are forbidden to reach outside. We first present 
Fig.~\ref{fig:performance-static}, which illustrates the relative total 
\pnp time of different algorithms. For results in the top two figures, 
since the number of objects is comparatively small, a faster relative 
conveyor belt speed is used ($5\times$ of that used in the bottom 
two figures). The parameters of the robots are set so that all algorithms 
can successfully pick all objects (for each figure, only a single set of 
robot parameters are used). Our selection of the robot model is somewhat 
arbitrary because we observe some but no substantial difference between 
different robot models. Therefore, we decided to select a diverse set of 
results given the limited available space. 
\begin{figure}[ht]
    \begin{center}
    \includegraphics[width={\iftwocolumn 3.45in \else 4in \fi},tics=5]{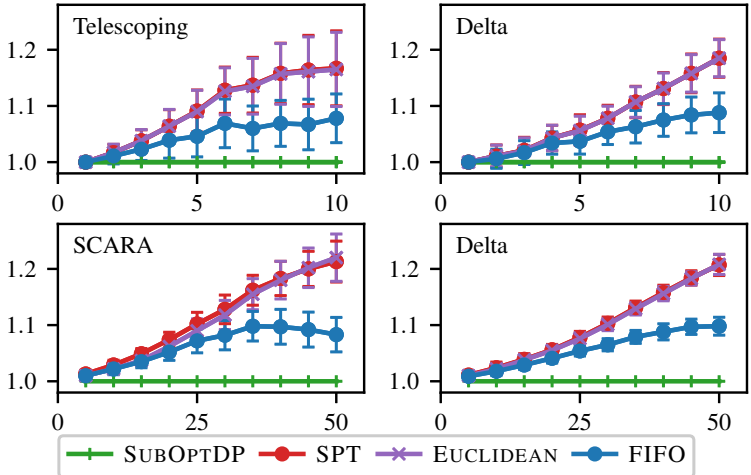}
    \end{center}
    \caption{\label{fig:performance-static} Execution time ratios of 
		various algorithms as compared with \alglocal. The $y$-axes are the
		ratio and the $x$-axes are the number of objects used in a given 
		run. Each data point is an average over 100 randomly generated 
		instances. Standard deviations are plotted as error bars.}
\end{figure}

From the figure, we observe that \alglocal (and therefore \algdp, which is 
at least as fast as \alglocal) yields significant savings in \pnp execution 
time. For example, for a SCARA robot, if we expect the workspace to have 
about ten objects at a time, then \algfifo, \alge, and \algspt are expected 
to spend around $10\%$ to $20\%$ more \pnp execution time as compared to 
our proposed solutions. 

We also observe that there does not seem to be an upper bound on the 
ratios as the number of objects increases. Though it appears that the 
ratio for \algfifo is tapering off in some of the figures, adding more 
objects shows that these ratios will 
eventually grow again. Though this may not be highly relevant in practice 
as the the number of objects is not likely to exceed a few tens, the 
phenomenon is structurally interesting. Our interpretation is that the 
behavior is perhaps caused by the optimal object picking sequence 
problem is similar in structure as hard TSPs \cite{gonzalez1976open} 
where polynomial time constant factor approximations are provably impossible. 

We also evaluated the impact of different workspace settings with the 
result given in Fig.~\ref{fig:performance-static-w} for the Delta 
robot model (again, other robot models yield similar results). We note 
that the execution time here is relative but provides a meaningful 
comparison between different workspace settings. 10 objects are used 
for each run, which are randomly allocated between $x = 3$ and $x =5$. 
We conclude that the impact of workspace 
appears to be small. 
\begin{figure}[ht]
    \begin{center}
    \includegraphics[width={\iftwocolumn 3.45in \else 4in \fi},tics=5]{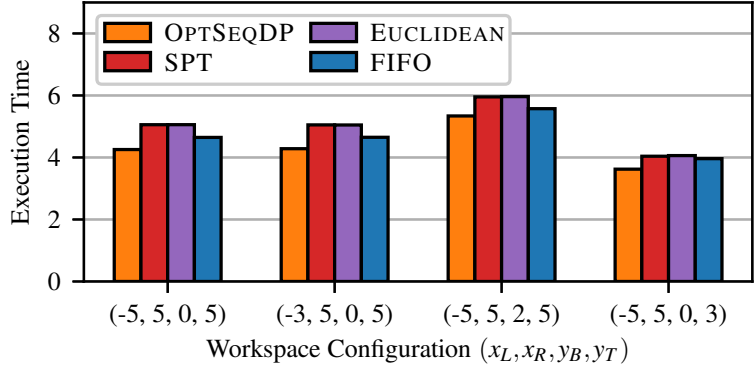}
    \end{center}
    \caption{\label{fig:performance-static-w} The execution time for 
		\pnp operations on 10 randomly placed objects. Each bar is obtained
		as an average over 100 runs.}
\end{figure}

\subsection{\pnp Performance under Continuous Setting} \label{subsec:continuous}
After the one-shot setting, we examined a more realistic setting and 
evaluated how the algorithms perform when the conveyor runs for an 
extended period of time. For this setting, we fixed the conveyor speed 
and robot parameters so that there are generally a few objects on the 
conveyor within the workspace, mimicking practical settings. We attempted 
two distributions with which objects are placed on the conveyor: Poisson 
and uniform. For all experiments, we sample $n = 10000$ objects and used a 
workspace with $x_L = -5, x_R = 5, y_B = 0$, and $y_T = 5$. The SCARA 
robot model is used here for two reasons: {\em (i)} SCARA is the most widely 
used industrial \pnp robot and {\em (ii)} the performance of SCARA is similar
to Delta and the simplified telescoping model. 

For the Poisson setting, a Poisson process with parameter 
$\lambda > 0$ is started at time $t_0 = 0$. Each time an event is triggered
by the process at time $t \ge t_0$ (including at $t = 0$), we sample the 
uniformly sample $(y_B, y_T)$ to get a $y$ value. An object is then placed 
at $(x_R, y)$ at time $t$. For the uniform setting, we sample $n$ points in 
the unit square and the scale the unit square to have the same height as the 
workspace. We then adjust the length of the unit square to simulate how 
densely the objects are placed on the conveyor belt. At $t_0 = 0$, the 
left side of the scaled unit square is aligned with the line segment 
between $(x_R, y_B)$ and $(x_R, y_T)$. 

In each experiment, we record the total number of objects that can be 
successfully picked up before some leave the left side of the workspace 
on the conveyor. The results are then scaled as ratios divided by 
$n = 10000$ and the data is visualized in Fig.~\ref{fig:continuous-poisson} 
and Fig.~\ref{fig:continuous-uniform}. 

\begin{figure}[ht]
    \begin{center}
    \includegraphics[width={\iftwocolumn 3.45in \else 4in \fi},tics=5]{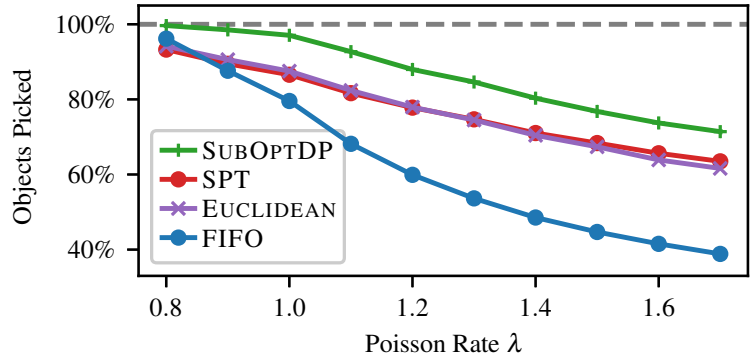}
    \end{center}
    \caption{\label{fig:continuous-poisson} Percentage of objects picked up
		out of $10000$ using different algorithms. The object distribution is 
		generated by 1D uniform distribution driven by a Poisson process with 
		different Poisson rate $\lambda$. We mention that the absolute value of 
		$\lambda$ does not bear much significance.}
\end{figure}

Looking at Fig.~\ref{fig:continuous-poisson}, except when the Poisson rate 
$\lambda$ is sufficiently low so that almost all objects can be picked up 
using any method, \alglocal maintain a lead of around $10\%$ as 
compared with \algspt and \alge. The lead over $\algfifo$ is as large as 
$40\%$. We also point out that, unlike the one-shot case, \algfifo generally 
performs the worst, though it works better than \algspt and \alge initially.
This is as expected since \algfifo does the least amount of optimization. 
In the one-shot setting, requiring that all objects can be picked up benefited 
\algfifo since it never let an object travel too far to the left side of 
the workspace. On the other hand, when there are too many objects, \algfifo
suffer since it attempts to catch objects that can be very far on the 
left side of the workspace, inducing penalty. 

\begin{figure}[ht]
    \begin{center}
    \includegraphics[width={\iftwocolumn 3.45in \else 4in \fi},tics=5]{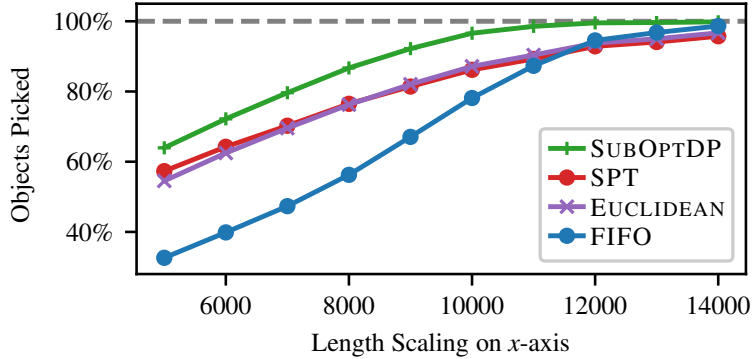}
    \end{center}
    \caption{\label{fig:continuous-uniform} Percent of objects picked up
		out of $10000$ using different algorithms. The object distribution is 
		generated by uniform distribution over the unit square. The height is 
		then scaled to $5$ and length is scaled as indicated in the figure, from 
		$5000$-$14000$.}
\end{figure}

For the uniform setting, e.g., Fig.~\ref{fig:continuous-uniform}, we 
observe a similar outcome as the Poisson setting. The figure looks 
different because larger scaling of the length of the unit square 
means sparser object placement and thus an easier setting, whereas 
larger $\lambda$ values suggesting denser object placement. 

From the experiments, we conclude that \algdp and \alglocal provide 
significant performance gain as compared with \algfifo, \algspt, and 
\alge. 

An illustration video of the proposed algorithms is provided at 
\url{https://youtu.be/bIomJzjKXyc}.

\section{Conclusion and Discussions}\label{sec:conclusion}
\vspace{-2mm}
Seeing that the state-of-the-art for robotic \pnp on a moving conveyor 
from the literature generally take a greedy approach in deciding the 
object picking order, we set out to explore how sub-optimal such 
approaches may be and ways to improve them. Using a simplified telescoping 
robot model, we show that greedy methods may lead to the loss of
$12\%$ to $40\%$ execution time efficiency. To address the shortcomings of greedy 
methods, we propose the exhaustive \algperm algorithm that computes optimal
picking sequences for a finite look-ahead horizon. The algorithm is then
further enhanced with dynamic programming and heuristic techniques to yield
\algdp and \alglocal that are sufficiently efficient for practical 
setting. In doing so, we also establish a principled method for computing 
shortest \pnp time profiles for complex Delta and SCARA robots, which 
have highly involved geometric and dynamic constraints. 

Extensive simulation-based experimentation indicates that \algdp and 
\alglocal fully realize the goal we set out to achieve, as reflected 
in three aspects: {\em (i)} both algorithms are  
computationally efficient for real-time \pnp applications, {\em (ii)}
in the one-shot setting, our algorithms deliver up to $20\%$ saving in 
execution time as compared to \algfifo, \alge, and \algspt, 
and {\em (iii)} in realistic \pnp operations on continuously running 
conveyors, our algorithms show $10$-$40\%$ advantage in terms of the 
number of picked objects. The magnitude of the efficiency 
gain has significant practical implications; a few percentage of 
difference in efficiency can separate success from failure. 
We observe no substantial difference in performance of our algorithms 
as we switch between robot models (i.e., telescoping, 
Delta, and SCARA). We conclude that \algdp and \alglocal could 
potentially make sizable impact to industrial \pnp systems. 

In future work, we would like to further explore two directions. 
First, a natural next step is to develop algorithms for the
collaboration among multiple robots to further enhance overall system 
throughput, which appears to require more carefully object selection across 
multiple robots. As a second direction, we are in the process of 
integrating our algorithms on some real robot hardware, with the hope 
of bringing our methods one step closer to real-world applications. 

\jy{It could be fairly interesting to explore a bit the case of multiple 
robots. It seems to have a reasonable amount of workload and could make 
a great addition to generate a journal publication.}

\vspace{-2mm}
{\small
\bibliographystyle{IEEEtran}
\bibliography{../bib/all}
}

\end{document}